%File: anonymous-submission-latex-2026.tex
\documentclass[letterpaper]{article} % DO NOT CHANGE THIS
\usepackage{aaai2026}  % DO NOT CHANGE THIS
\usepackage{times}  % DO NOT CHANGE THIS
\usepackage{helvet}  % DO NOT CHANGE THIS
\usepackage{courier}  % DO NOT CHANGE THIS
\usepackage[hyphens]{url}  % DO NOT CHANGE THIS
\usepackage{graphicx} % DO NOT CHANGE THIS
\urlstyle{rm} % DO NOT CHANGE THIS
  % DO NOT CHANGE THIS
\usepackage{natbib}  % DO NOT CHANGE THIS AND DO NOT ADD ANY OPTIONS TO IT
\usepackage{caption} % DO NOT CHANGE THIS AND DO NOT ADD ANY OPTIONS TO IT
\frenchspacing  % DO NOT CHANGE THIS
\setlength{\pdfpagewidth}{8.5in} % DO NOT CHANGE THIS
\setlength{\pdfpageheight}{11in} % DO NOT CHANGE THIS
%
% These are recommended to typeset algorithms but not required. See the subsubsection on algorithms. Remove them if you don't have algorithms in your paper.
\usepackage{algorithm}
\usepackage{algpseudocode}
\usepackage{amssymb,amsthm}
\newtheorem{proposition}{Proposition}
%
% These are are recommended to typeset listings but not required. See the subsubsection on listing. Remove this block if you don't have listings in your paper.
\usepackage{newfloat}
\usepackage{listings}
\usepackage{amsfonts}
\usepackage{amsmath}
\usepackage{xcolor}
\DeclareCaptionStyle{ruled}{labelfont=normalfont,labelsep=colon,strut=off} % DO NOT CHANGE THIS
\lstset{%
	basicstyle={\footnotesize\ttfamily},% footnotesize acceptable for monospace
	numbers=left,numberstyle=\footnotesize,xleftmargin=2em,% show line numbers, remove this entire line if you don't want the numbers.
	aboveskip=0pt,belowskip=0pt,%
	showstringspaces=false,tabsize=2,breaklines=true}
\floatstyle{ruled}
\newfloat{listing}{tb}{lst}{}
\floatname{listing}{Listing}
%
% Keep the \pdfinfo as shown here. There's no need
% for you to add the /Title and /Author tags.
\pdfinfo{
/TemplateVersion (2026.1)
}

% DISALLOWED PACKAGES
% \usepackage{authblk} -- This package is specifically forbidden
% \usepackage{balance} -- This package is specifically forbidden
% \usepackage{color (if used in text)
% \usepackage{CJK} -- This package is specifically forbidden
% \usepackage{float} -- This package is specifically forbidden
% \usepackage{flushend} -- This package is specifically forbidden
% \usepackage{fontenc} -- This package is specifically forbidden
% \usepackage{fullpage} -- This package is specifically forbidden
% \usepackage{geometry} -- This package is specifically forbidden
% \usepackage{grffile} -- This package is specifically forbidden
% \usepackage{hyperref} -- This package is specifically forbidden
% \usepackage{navigator} -- This package is specifically forbidden
% (or any other package that embeds links such as navigator or hyperref)
% \indentfirst} -- This package is specifically forbidden
% \layout} -- This package is specifically forbidden
% \multicol} -- This package is specifically forbidden
% \nameref} -- This package is specifically forbidden
% \usepackage{savetrees} -- This package is specifically forbidden
% \usepackage{setspace} -- This package is specifically forbidden
% \usepackage{stfloats} -- This package is specifically forbidden
% \usepackage{tabu} -- This package is specifically forbidden
% \usepackage{titlesec} -- This package is specifically forbidden
% \usepackage{tocbibind} -- This package is specifically forbidden
% \usepackage{ulem} -- This package is specifically forbidden
% \usepackage{wrapfig} -- This package is specifically forbidden
\usepackage{booktabs}
\usepackage{adjustbox}
\usepackage{multirow}
% DISALLOWED COMMANDS
% \nocopyright -- Your paper will not be published if you use this command
% \addtolength -- This command may not be used
% \balance -- This command may not be used
% \baselinestretch -- Your paper will not be published if you use this command
% \clearpage -- No page breaks of any kind may be used for the final version of your paper
% \columnsep -- This command may not be used
% \newpage -- No page breaks of any kind may be used for the final version of your paper
% \pagebreak -- No page breaks of any kind may be used for the final version of your paperr
% \pagestyle -- This command may not be used
% \tiny -- This is not an acceptable font size.
% \vspace{- -- No negative value may be used in proximity of a caption, figure, table, section, subsection, subsubsection, or reference
% \vskip{- -- No negative value may be used to alter spacing above or below a caption, figure, table, section, subsection, subsubsection, or reference

\setcounter{secnumdepth}{0} %May be changed to 1 or 2 if section numbers are desired.

% The file aaai2026.sty is the style file for AAAI Press
% proceedings, working notes, and technical reports.
%

% Title

% Your title must be in mixed case, not sentence case.
% That means all verbs (including short verbs like be, is, using,and go),
% nouns, adverbs, adjectives should be capitalized, including both words in hyphenated terms, while
% articles, conjunctions, and prepositions are lower case unless they
% directly follow a colon or long dash
\newcommand{\model}{\textsc{RTFM}}
\newcommand{\modelfull}{\textsc{Robust Tabular Foundation Models}}
\title{Robust Tabular Foundation Models}
\author{
    %Authors
    % All authors must be in the same font size and format.
    Matthew Peroni\textsuperscript{\rm 1,}\textsuperscript{\rm 2},
     %\thanks{With help from the AAAI Publications Committee.}\\
    Franck Le \textsuperscript{\rm 2},
    Vadim Sheinin \textsuperscript{\rm 2}
}
\affiliations{
    %Afiliations
    \textsuperscript{\rm 1} MIT \;
    \textsuperscript{\rm 2} IBM Research\\
     mperoni1@mit.edu, fle@us.ibm.com, vadims@us.ibm.com
    % If you have multiple authors and multiple affiliations
    % use superscripts in text and roman font to identify them.
    % For example,

    % Sunil Issar\textsuperscript{\rm 2},
    % J. Scott Penberthy\textsuperscript{\rm 3},
    % George Ferguson\textsuperscript{\rm 4},
    % Hans Guesgen\textsuperscript{\rm 5}
    % Note that the comma should be placed after the superscript

%
% See more examples next
}

%Example, Single Author, ->> remove \iffalse,\fi and place them surrounding AAAI title to use it
\iffalse
\title{My Publication Title --- Single Author}
\author {
    Author Name
}
\affiliations{
    Affiliation\\
    Affiliation Line 2\\
    name@example.com
}
\fi

\iffalse
%Example, Multiple Authors, ->> remove \iffalse,\fi and place them surrounding AAAI title to use it
\title{My Publication Title --- Multiple Authors}
\author {
    % Authors
    First Author Name\textsuperscript{\rm 1},
    Second Author Name\textsuperscript{\rm 2},
    Third Author Name\textsuperscript{\rm 1}
}
\affiliations {
    % Affiliations
    \textsuperscript{\rm 1}Affiliation 1\\
    \textsuperscript{\rm 2}Affiliation 2\\
    firstAuthor@affiliation1.com, secondAuthor@affilation2.com, thirdAuthor@affiliation1.com
}
\fi

% REMOVE THIS: bibentry
% This is only needed to show inline citations in the guidelines document. You should not need it and can safely delete it.
% \usepackage{bibentry}
% END REMOVE bibentry

\begin{document}

\maketitle

\begin{abstract}
The development of tabular foundation models (TFMs) has accelerated in recent years, showing strong potential to outperform traditional ML methods for structured data. A key finding is that TFMs can be pretrained entirely on synthetic datasets, opening opportunities to design data generators that encourage desirable model properties. Prior work has mainly focused on crafting high-quality priors over generators to improve overall pretraining performance. Our insight is that parameterizing the generator distribution enables an adversarial robustness perspective: during training, we can adapt the generator to emphasize datasets that are particularly challenging for the model. We formalize this by introducing an optimality gap measure, given by the difference between TFM performance and the best achievable performance as estimated by strong baselines such as XGBoost, CatBoost, and Random Forests. Building on this idea, we propose \modelfull\ (\model), a model-agnostic adversarial training framework. Applied to the TabPFN V2 classifier, \model\ improves benchmark performance, with up to a 6\% increase in mean normalized AUC over the original TabPFN and other baseline algorithms, while requiring less than 100k additional synthetic datasets. These results highlight a promising new direction for targeted adversarial training and fine-tuning of TFMs using synthetic data alone.
\end{abstract}

% Uncomment the following to link to your code, datasets, an extended version or similar.
% You must keep this block between (not within) the abstract and the main body of the paper.
% \begin{links}
%     \link{Code}{https://aaai.org/example/code}
%     \link{Datasets}{https://aaai.org/example/datasets}
%     \link{Extended version}{https://aaai.org/example/extended-version}
% \end{links}

\section{Introduction}
Recent survey studies have challenged the deep learning community to improve upon boosted tree methods for learning from structured data, having found a significant gap in performance remained between the paradigms \cite{grinsztajn_why_2022}. In response, substantial progress has been made to close this gap, resulting in a variety of strong deep learning approaches being proposed over the past few years \citep{somepalli_saint_2021, gorishniy_revisiting_2023, hegselmann_tabllm_2023, hollmann_tabpfn_2023, carballo_tabtext_2023, yan_making_2024, link_mitra_2025}. Among these new approaches, tabular foundation models (TFMs) which rely on in-context learning (ICL) \citep{hollmann_tabpfn_2023, qu_tabicl_2025, link_mitra_2025} have emerged as a promising direction for classification and regression tasks with structured datasets. The benefits of this approach are two-fold. In many cases, the performance of the TFMs improves upon strong traditional baselines such as boosted trees \citep{friedman_greedy_2001, erickson_tabarena_2025}. Additionally, this performance is achieved in a zero-shot framework, enabling high-quality predictions on new datasets in milliseconds when GPU-accelerated. Further, as we will explore in this work, since these TFMs are pretrained using synthetic data, there is significant opportunity for improvement as the data generation process is expanded and improved. 

Training TFMs relies on generating a large amount of diverse synthetic datasets \citep{hollmann_tabpfn_2023}. As we will formally define in the following section, this generation process relies on constructing structural causal models (SCMs) from which datasets can be sampled \citep{pearl_causality_2009, shanmugam_elements_2018}. The structure of these SCMs is implicitly parameterized, giving significant control over the data generation process. All current publicly available, competitive TFMs  \citep{hollmann_tabpfn_2023, qu_tabicl_2025, link_mitra_2025} have been pretrained on datasets generated from a fixed prior distribution over these SCM parameters. However, fixed priors underrepresent certain regions of the parameter space, potentially degrading performance on real-world datasets with similar structure. This contributes to state-of-the-art TFMs still lagging behind tree-based methods on some benchmarks \citep{mcelfresh_when_2024, ye_closer_2025}. In this work, we leverage the significant control provided by the data generation process to frame TFM training from an adversarial robustness perspective \citep{madry_towards_2019}. Recent work by \cite{wu_zero-shot_2025} also explores a narrower version of this idea, focusing on adjusting the weights of a specific class of SCMs in GAN-style training of TFMs. In this work, we focus on a broad and highly flexible framework for adversarially robust training of TFMs. Our contributions can be summarized as: \textbf{1)} We formalize adversarial training over the SCM parameter space, allowing the model to adapt to challenging regions (e.g., varying numbers of features, categorical feature ratios, nonlinearities). We introduce an \textit{optimality gap} concept and use it to target regions where the TFM underperforms relative to the best achievable performance. \textbf{2)} We propose an efficient, model-agnostic two-stage adversarial training algorithm for TFMs, which we call \modelfull\ (\model). \textbf{3)} We apply \model\ to TabPFN V2 \citep{hollmann_accurate_2025}, showing that with only 90k additional training datasets, we can significantly improve the ranking of TabPFN on several real-world tabular benchmarks.

\section{Problem Definition}
\begin{figure*}[t!]
    \centering
    \includegraphics[width=0.90\linewidth]
    {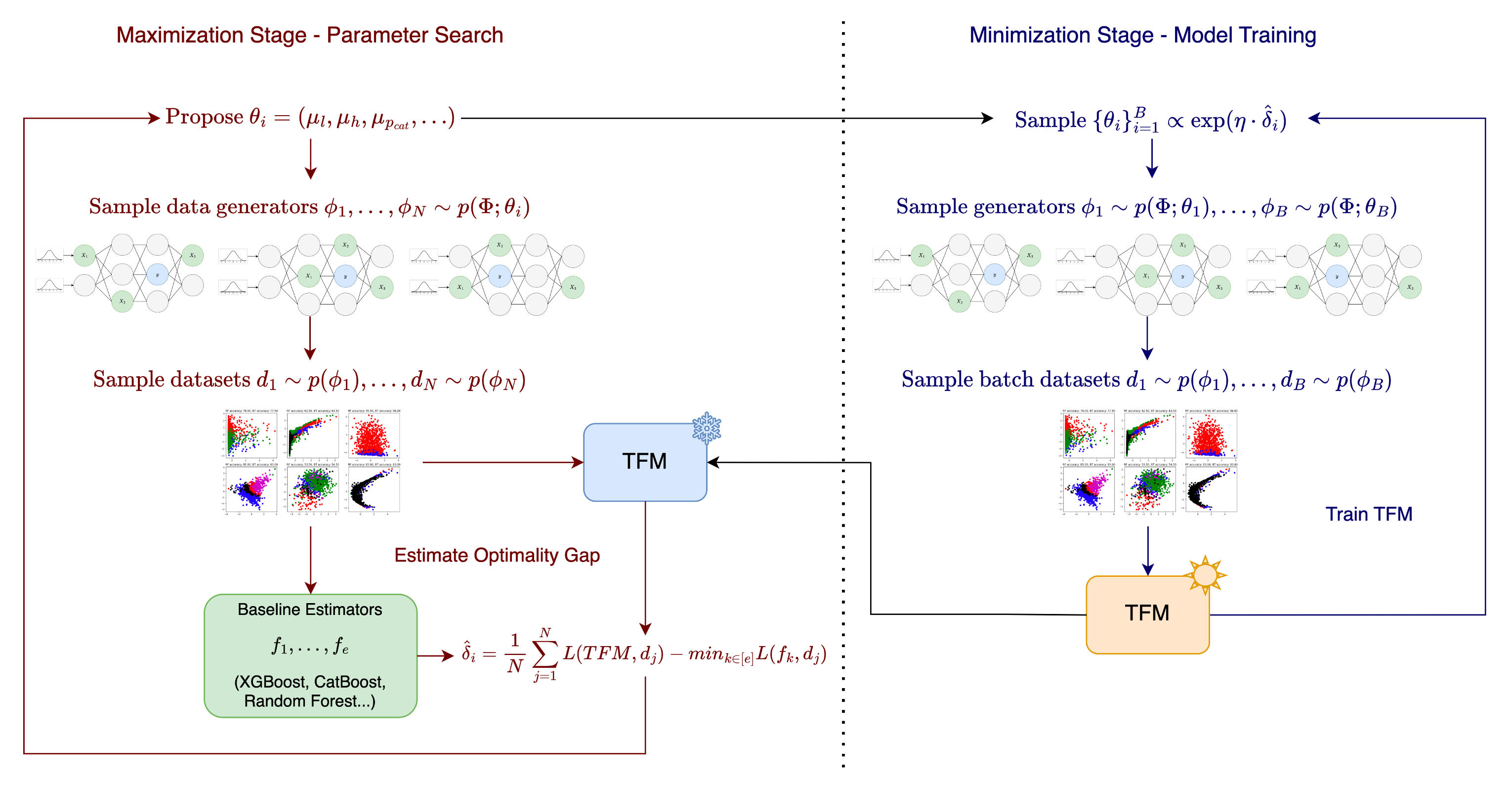}
    %{images/diagrams/rtfm_diagram_sampling.drawio.png}
    \caption{Overview of \modelfull\ (\model).}
    \label{fig:rtfm}
\end{figure*}
We begin by defining some notation, and then introduce our theoretical formulation for \model. Let $\Phi$ represent the space of hypothesis functions which define the relationship between features $x$ and outcome $y$. The outcome $y$ can be discrete or continuous, but for this work, we focus on classification tasks. Then a hypothesis $\phi \in \Phi$ is a specific function which we can treat as a data generating mechanism. In the case of SCMs, $\Phi$ is the entire space of SCMs, and $\phi \in \Phi$ is a specific instance of an SCM, from which a dataset $D \sim p(\phi)$ can be generated. In this work, we implement SCMs as randomly initialized multi-layer perceptrons (MLPs), as introduced in \cite{hollmann_tabpfn_2023}. Full details of this process are provided in Appendix \ref{appendix:scm}. For example, we can sample the number of layers $l \sim p(l)$, the hidden size $h \sim p(h)$, the activation function $a \sim p(a)$, and the ratio of categorical features $r_{cat} \sim p(r_{cat})$, among other dimensions. An SCM is generated by first sampling from the joint distribution over these hyperparameters, each of which is parameterized. For example, we might sample the ratio of categorical features $r_{cat} \sim TruncNorm(\mu_{r_{cat}}, a=0, b=1)$ from the truncated normal distribution with some mean $\mu_{r_{cat}}$ which parametrizes the distribution. Then, assuming we have similar parameterizations for the distribution over each hyperparameter, we define $\theta = (\mu_l, \mu_h, \mu_{r_{cat}}, \dots)$ as the parameterization of the joint distribution over all the hyperparameters. We define $\mathcal{P}$ as the space of the parameters, such that $\theta \in \mathcal{P}$. The generators are then sampled from $\phi \sim p(\Phi;\theta)$.

 We now define the original objective for the Prior-Fitted Network (PFN) learning task and provide our adversarially robust formulation. Let $g_{\mathbf{W}}$ be our predictive model parameterized by weight matrix $\mathbf{W}$. As a PFN, $g_{\mathbf{W}}$ takes as input a sequence of labeled training samples and unlabeled test samples, and predicts labels for the test samples. For a given data generator $\phi \in \Phi$, the PFN loss is defined as the cross entropy loss over the test samples,
\begin{align}
% \begin{split}
    \mathcal{L}_{PFN}(\mathbf{W}; \phi) = \\
&\mathbb{E}_{(\{(x_{t},y_{t})\} \cup D) \sim p(\phi)}[-\log(g_{\mathbf{W}}(y_{t}|x_{t}, D)]. \nonumber
% \end{split}
\end{align}
Since the model will be trained by sampling many data generators from the distribution $\phi \sim p(\Phi ; \theta)$, we can write the original optimization problem as follows,
\begin{equation}
    \min_{\mathbf{W}} \mathbb{E}_{\phi \sim p(\Phi ; \theta)} [\mathcal{L}_{PFN}(\mathbf{W}; \phi)].
\end{equation}
Since our goal is to learn a foundation model which performs well across all data generation schemes, we want to introduce an adversary who can shift the distribution over data generators to identify the regions over which $g_{\mathbf{W}}$ yields the worst performance. However, there is a key subtlety in this formulation. Maximizing the loss alone could lead the maximizer to select data generators which produce datasets that are hard to learn for any model. Instead, we want to maximize the \textit{optimality gap} between the model's current performance and the best achievable performance. Since we are using the cross-entropy loss, we can frame this optimality gap in terms of the conditional entropy of the data. For any data generator $\phi$, let us denote the features as a random variable $Z_x$ and the target as $Z_y$ such that $(Z_x, Z_y) \sim p(\phi)$. Then the Bayes' optimal predictor is exactly $f^* = p(Z_y | Z_x)$ which achieves the cross-entropy loss $L_{CE}(f^*) = H_{\phi}(Z_y | Z_x)$, the conditional entropy of the target $Z_y$ given the features $Z_x$. It follows that for any fixed data generator, $\mathcal{L}_{PFN} \geq H(Z_y | Z_x)$, and $\mathcal{L}_{PFN} - H(Z_y | Z_x) \geq 0$ is the optimality gap. We can then state the adversarially robust optimization problem in terms of maximizing the optimality gap,
\begin{equation}\label{eq:main-objective}
    \min_{\mathbf{W}}\max_{\theta \in \mathcal{P}} \mathbb{E}_{\phi \sim p(\Phi ; \theta)} [\mathcal{L}_{PFN}(\mathbf{W}; \phi) - H_{\phi}(Z_y | Z_x)].
\end{equation}
In practice, maximizing over a single parameterization could lead to the model overfitting to a particular region of the parameter space. We therefore frame this problem from a distributionally robust optimization (DRO) perspective by constraining the actions of the maximizer \cite{rahimian2019distributionally}. Rather than a specific parameterization $\theta$, we allow the maximizer to select a distribution $Q \in \Delta_{\mathcal{P}}$ over parameters, where $\Delta_{\mathcal{P}}$ is the simplex over $\mathcal{P}$. We assume $\mathcal{P}$ is discrete, which is reasonable in practice since the parameters are either categorical or can be easily discretized (e.g. $\mu_{r_{cat}} \in \{0, 0.1, \dots, 1\}$). For this work, we do not assume access to a reference distribution over $\mathcal{P}$ from which we can constrain the divergence of $Q$. Instead, we will constrain the minimum entropy, $H_{min}$, of $Q$ directly. This is equivalent to constraining the KL-divergence of $Q$ when the reference distribution is uniform. For brevity, we denote the objective function as $\delta_\theta(\mathbf{W}) = \mathbb{E}_{\phi \sim p(\Phi ; \theta)} [\mathcal{L}_{PFN}(\mathbf{W}; \phi) - H_{\phi}(Z_y | Z_x)]$. We can then define the DRO problem as
\begin{equation}\label{eq:dro-objective}
    \min_{\mathbf{W}}\max_{\substack{
        Q \in \Delta_{\mathcal{P}} \\ H(Q) \geq H_{min}}} \mathbb{E}_{\theta \sim Q}[\delta_\theta(\mathbf{W})].
\end{equation}
As we prove in Appendix \ref{sec:proofs}, the optimal distribution for the maximizer in Equation \ref{eq:dro-objective} is a softmax distribution. Specifically, denoting $Q^* = (q_1^*,\dots,q_{|\mathcal{P}|}^*)$ as the optimal distribution, for any $i \in |\mathcal{P}|$, $q_i^* \propto \exp(\eta \cdot \delta_{\theta_i}(\mathbf{W}))$. The temperature $\eta  = \eta(H_{min}, \delta_\theta(\mathbf{W}))$ is uniquely determined by the minimum entropy and the optimality gaps. The DRO problem can then be rewritten as
\begin{equation}\label{eq:dro-simplified}
    \min_{\mathbf{W}} \sum_{i=1}^{|\mathcal{P}|} 
    \frac{\exp(\eta \cdot \delta_{\theta_i}(\mathbf{W}))}{\sum_{i \in |\mathcal{P}|}\exp(\eta \cdot \delta_{\theta_i}(\mathbf{W}))} \cdot
    \delta_{\theta_i}(\mathbf{W}).
\end{equation}
While Equation \ref{eq:dro-simplified} is the precise theoretical optimization problem we wish to solve, in reality, estimating the conditional entropy $H_{\phi}(Z_y | Z_x)$ or evaluating it analytically is often infeasible. Therefore, rather than estimate $H_{\phi}(Z_y | Z_x)$ directly, in this work, we replace this term with the minimum cross-entropy loss over a collection of baseline estimators $f_1, \dots, f_e$, such as XGBoost, Catboost, and Random Forests \cite{chen_xgboost_2016,dorogush_catboost_2018, breiman_random_2001}, which are fit to each generated dataset. Since $\min_{i \in [e]}\mathcal{L}_{PFN}(f_i;\phi) \geq H(Z_y | Z_x)$, it follows that $\mathbb{E}_{\phi \sim p(\Phi;\theta)}[\mathcal{L}_{PFN}(\mathbf{W}; \phi) - \min_{i \in [e]} \mathcal{L}_{PFN}(f_i; \phi)] \leq \delta_\theta(\mathbf{W})$, providing us with a lower bound on the optimality gap. We then modify our original objective function, defining 
\begin{equation}
    \widehat\delta_\theta(\mathbf{W}) = \mathbb{E}_{\phi \sim p(\Phi;\theta)}[\mathcal{L}_{PFN}(\mathbf{W}; \phi) - \min_{i \in [e]} \mathcal{L}_{PFN}(f_i; \phi)],
\end{equation} 
and arriving at the following optimization problem,
\begin{equation}\label{eq:general-robust-opt}
    \min_{\mathbf{W}} \sum_{i=1}^{|\mathcal{P}|} 
    \frac{\exp(\eta \cdot \widehat\delta_{\theta_i}(\mathbf{W}))}{\sum_{i \in |\mathcal{P}|}\exp(\eta \cdot \widehat\delta_{\theta_i}(\mathbf{W}))} \cdot
    \widehat\delta_{\theta_i}(\mathbf{W}).
\end{equation}
Since $\widehat \delta_\theta(\mathbf{W}) \leq \delta_\theta(\mathbf{W})$, it follows that $\widehat\delta_\theta(\mathbf{W}) > 0$ implies $\delta_\theta(\mathbf{W}) > 0$, such that for any parameter estimated to have a positive optimality gap, the model $g_{\mathbf{W}}$ does indeed have room for improvement. Further, this lower bound can always be improved by adding more baseline estimators with varying strengths and inductive biases.
We aim to solve Equation \ref{eq:general-robust-opt} to recover a robust tabular foundation model. As a whole, this problem is both non-convex and non-differentiable, including a composition of multiple black-box functions. In the following section, we introduce a two-stage optimization approach to find high-quality solutions to this problem.
% Notice that the minimization over $\mathbf{W}$ is outside the expectation over data generators while the minimization over $f_1,\dots,f_k$ is within the expectation. This reflects the Tab-PFN model being a foundation model, which aims to minimize the loss over a distribution of data generators, while the individual boosted tree models are fit to specific data generators. We can further generalize this formulation by estimating $H(Z_y | Z_x)$ using a collection of baseline estimators $f_1, \dots f_e$, taking the minimum cross-entropy loss over all of these estimators as follows,

%\notePH{Suppose we refer to eq(4) above as regret. I think we could setup the RTFM outer loop or parameter selection as one to reduce the total cum-regret over a horizon. So, it would be good to show plots of this cum-reg as opposed to just instantaneous regret which is also useful. Sampling probability, was previously: $\theta_i^t = argmax_{\theta_i}  \delta_i(\theta_i)$. Now in the figure we have $\theta_i^t \propto \delta_i(\theta_i)$. The IGW version is close it is is just set up as: $\theta_i^t \propto 1/\left[\alpha + \gamma(\delta^* - \delta_i(\theta_i))\right] $ where $ \delta^* = \max_{\theta_i}  \delta_i(\theta_i)$. Let me think about how do we go about setting up the maximization stage as we need a representative set of parameters and not just those obtained in the path of finding the optimal.}

\section{\modelfull}\label{sec:rtfm}
\begin{table*}[t]
\centering
\caption{Experiment results across two tabular dataset benchmarks. For consistency, we report normalized AUC, with values scaled to [0,1] for each dataset \cite{mcelfresh_when_2024,hollmann_accurate_2025}. For rank-1 wins, we do not count ties, only outright wins.}
\setlength{\tabcolsep}{2pt} % reduce horizontal padding
% \small % smaller font size
\begin{adjustbox}{max width=1\textwidth} % scale table if still too wide
\begin{tabular}{llccccccc}
\toprule
 &\textbf{Metric} & \textbf{Log. Reg.} & \textbf{MLP} & \textbf{Random Forest} & \textbf{CatBoost} & \textbf{XGBoost} & \textbf{TabPFN\textsubscript{n.e.}} & \textbf{TabPFN\textsubscript{n.e.} (\model)} \\
\midrule
\multirow{3}{*}{\textbf{TabPertNet}\cite{ye_towards_2024}} 
& Mean rank AUC & 5.1 & 4.6 & 4.0 & 3.8 & 4.6 & 3.2 & \textbf{2.7} \\
& Mean Norm. AUC  &$0.4253 $&$0.5005$& $0.6481$ & $0.6663$ & $0.5222$ & $0.7483$ & $\mathbf{0.8167}$ \\
& Rank-1 Wins & 1 & 8 & 5 & 7 & 5 & 11 & \textbf{17}\\
\midrule
\multirow{3}{*}{\textbf{TabArena}\cite{erickson_tabarena_2025}} 
& Mean rank AUC OVO & 4.9 & 6.3 & 4.8 & 3.4 & 4.5 & 2.2 & \textbf{1.9} \\
& Mean Norm. AUC OVO &$0.4277$& $0.1801$ & $0.5761$ & $0.7749$ & $0.5918$ & $0.9031$ & $\mathbf{0.9298}$ \\
& Rank-1 Wins & 2 & 0 & 0 & 2 & 0 & 5 & \textbf{12}\\
\bottomrule
\end{tabular}
\end{adjustbox}
\label{table:main}
\end{table*}
In this section, we give an overview of our two-stage optimization algorithm, \modelfull\ (\model), which is visualized in Figure \ref{fig:rtfm}. The pseudocode and additional details are provided in Appendix \ref{sec:appendix-alg}. We first decompose the problem naturally into a maximization stage (parameter search) and a minimization stage (model training). We assume access to a pretrained TFM, $g_{\mathbf{W}}$, with weights $\mathbf{W}$ which will be used as the input model for our \model\ algorithm. While not absolutely necessary, starting with a pretrained model gives us a warm-start for the model weights. In addition, we specify a fixed set of baseline models $f_1,\dots,f_e$ which will be used to estimate the optimality gap. We begin with the maximization stage, during which we freeze $g_{\mathbf{W}}$ to maximize the optimality gap in its current state. Since the cardinality of the parameter space is large in practice, we use a black-box optimization algorithm \cite{akiba_optuna_2019, watanabe_tree-structured_2023, bergstra_algorithms_2011} to efficiently search the space for parameters with large optimality gaps. 

Using the black-box optimization algorithm, parameters $\theta_i$ are proposed in sequence. For each $\theta_i$ proposed, we estimate the optimality gap by sampling a fixed number, $n_{ds}$, of generators $\phi_1,\dots, \phi_{n_{ds}} \sim p(\Phi;\theta_i)$ and datasets $d_j\sim p(\phi_j)$ and compute $\widehat{\delta}_{\theta_i} = \frac{1}{n_{ds}}\sum_{j=1}^{n_{ds}}\mathcal{L}_{PFN}(\mathbf{W}; d_j) - \min_{k \in [e]}\mathcal{L}_{PFN}(f_k, d_j)$. Then for each proposed $\theta_i$, we must fit $n_{ds} \cdot e$ baseline estimators. However, each pair of dataset and baseline estimator can be fit independently, such that this step can be trivially parallelized, where each baseline estimator is fit to each dataset in parallel. In practice, we found that for $n_{ds} = 20$ and $e = 7$, the estimated optimality gap $\widehat{\delta}_{\theta_i}$ could be computed in a matter of seconds when parallelized, given sufficient CPU cores (we used $n_{cores} = n_{ds} \cdot e$). We repeat this process for a fixed number of trials, $n_{trials}$, with the underlying optimization algorithm proposing new parameters $\theta_i$ based on the observed estimated optimality gaps. At the end of this stage, we have a collection $\{(\theta_i, \delta_{\theta_i})\}_{i=1}^{n_{trials}}$ of parameters and associated estimated optimality gaps.

In the minimization stage, our goal is to train $g_{\mathbf{W}}$ on the regions of the parameter space with large optimality gaps. Following our derivation in Equation \ref{eq:general-robust-opt}, we define a distribution $Q$ over $\mathcal{P}$ such that $q_i \propto \exp(\eta \cdot \delta_{i}(\mathbf{W}))$.  The function \(\eta \mapsto H(Q(\eta))\) is strictly decreasing on $(0, \infty)$, so we can quickly find $\eta$ given a set $H_{min}$ via one-dimensional root finding (e.g., bisection). Using the observations from the maximization stage, our optimization problem becomes
\begin{equation}\label{eq:general-robust-opt-trials}
    \min_{\mathbf{W}} \sum_{i=1}^{n_{trials}} 
    \frac{\exp(\eta \cdot \widehat\delta_{\theta_i}(\mathbf{W}))}{\sum_{i \in [n_{trials}]}\exp(\eta \cdot \widehat\delta_{\theta_i}(\mathbf{W}))} \cdot
    \widehat\delta_{\theta_i}(\mathbf{W}).
\end{equation}
% Rather than taking a sum over all $|\mathcal{P}|$ possible parameterizations as in Equation \ref{eq:general-robust-opt}, we take the sum over the $n_{trials}$ parameters that were proposed during the maximization stage. 
Then, to train the TFM, we generate each dataset within a batch by sampling $\theta_i \sim Q$ and generator $\phi \sim p(\Phi; \theta_i)$. This sampling and dataset generation process is repeated for each batch during model training such that, in expectation, the loss is weighted according to Equation \ref{eq:general-robust-opt-trials}. After training $g_{\mathbf{W}}$ for $n_{iter}$ iterations, we freeze the model return to the maximization stage, using the updated model weights to generate the next set of estimated optimality gaps. This max-min iteration is repeated until the objective in Equation \ref{eq:general-robust-opt-trials} converges. In addition to the baseline estimators $f_1,\dots,f_e$, after a fixed number of max-min iterations, we incorporate the TFM with its original weights as an additional baseline model. This can help mitigate any unlearning that has occurred, encouraging the model to both improve upon strong baselines as well as its own performance at the start of training.

\section{Experiments}

We evaluate our \model\ algorithm on benchmark classification datasets from TabArena \cite{erickson_tabarena_2025} and TabPertNet \cite{ye_towards_2024}, using TabPFN V2 \cite{hollmann_accurate_2025} as the starting point for our TFM. We provide a full description of the experiment setup and results in Appendix \ref{appendix:results}. For the \model\ algorithm, we use $n_{trials}=100$ and $n_{ds}=20$ datasets to estimate the optimality gap for each trail. During the minimization stage, we use a learning rate of $lr=1e\text{-}5$, a batch size of $b=64$, and train the model for $n_{iter}=3000$ training steps per iteration. We run the full max-min loop for $e=30$ epochs. For baseline models, we use Random Forest, CatBoost, XGBoost, Logistic Regression, and MLPs. For this work, we evaluate the performance of each algorithm in Table \ref{table:main} using their respective default settings, and report TFM results without ensembling. All experiments were run on a single node with one A100 GPU and 256 CPU cores over which we distributed the training and evaluation of baseline estimators during the maximization stage. Details of the parameter space are provided in Appendix \ref{appendix:scm}.

The results of our experiments are summarized in Table \ref{table:main}, and full results are provided in Appendix \ref{appendix:results}. Overall, we observe a significant improvement in both AUC and mean rank when applying the \model\ algorithm compared to both the baseline TabPFN model, as well as a the collection of baseline estimators. To measure the significance of our results, we first perform the Friedman test for repeated samples on the median normalized AUC scores \cite{demvsar2006statistical, benavoli2016should}. For the TabPertNet datasets, the Friedman test yielded $p=2.2 \times 10^{-14}$, and for TabArena, the test yielded $p=1.6\times10^{-12}$. Having established significance for the benchmark performance varying by model class, we performed the Wilcoxon signed-rank test pair-wise, comparing the RTFM version of TabPFN to the other models \cite{conover1999practical}. For TabPertNet, the Wilcoxon test comparing TabPFN (RTFM) and the original TabPFN model yielded $p=0.0023$, and for TabArena, the test yielded $p=0.0103$. The other Wilcoxon tests all yielded smaller p-values, supporting the significance of applying RTFM to improving TabPFN against all the baseline estimators.These results support the statistical significance of the performance improvements observed from applying the RTFM algorithm. 

Across our three metrics, we observe that TabPFN (RTFM) strictly dominates the other models. Further, among datasets where the original TabPFN model is not the top ranked model (rank $>$ 1), we observe consistent improvement from applying the RTFM algorithm. For example, for TabPertNet, among datasets where TabPFN (original) has rank $>$ 2, the mean rank of TabPFN (RTFM) is 3.26, which is the highest mean rank among all models and a 1.24 point improvement over the original. We also find that among 21\% of these datasets where TabPFN (original) is strictly worse than the baseline models, TabPFN (RTFM) ``leaps'' the competition and is the top ranked model. We observe similar gains in mean rank for the TabArena datasets, with a 1.8 point improvement in mean rank on datasets where TabPFN (original) has rank $>$ 1 and TabPFN (RTFM) ``leaps'' the competition for 20\% of these datasets. These results demonstrate the ability of the \model\ algorithm to not only improve the performance of TFMs overall, but to address specific types of datasets where the TFM lags behind traditional methods, in some cases becoming the dominant model. We also note that these results were achieved while training the TabPFN model on an additional $90k$ synthetic datasets, about 1\% of the $>9$ million datasets generated to pretrain the original TabPFN model.

\section{Conclusion}
In this work, we introduce an exciting new direction for training TFMs which is both model-agnostic and massively extensible. We demonstrate its ability to improve upon an existing state-of-the-art TFM with limited additional training across multiple tabular benchmarks. While we use TabPFN as our starting point, \model\ can be applied to any TFM, such as Mitra or TabICL. Our framework can also be easily extended to regression tasks. Further, we achieve these results while only implementing MLP-based SCMs. In future work, plan to expand the parameter space and include tree-based SCMs within our framework. The \model\ algorithm is highly-parallelizable, enabling large-scale training and efficient parameter search. We hope these promising results motivate further research in the direction of optimized, adaptive synthetic dataset generation for pretraining and fine-tuning of TFMs.

% \section*{Acknowledgments}
% Acknowledgments

\bibliographystyle{plain}
\bibliography{aaai2026}

%%%%%%%%%%%%%%%%%%%%%%%%%%%%%%%%%%%%%%%%%%%%%%%%%%%%%%%%%%%%

\appendix
\setcounter{secnumdepth}{1}
\section{\model\ Algorithm}\label{sec:appendix-alg}
We summarize \modelfull\ in Algorithms \ref{alg:rtfm} and \ref{alg:parameter-search}. A description of the algorithm is given in Section \ref{sec:rtfm}. In Algorithm \ref{alg:parameter-search}, we reference an abstract function SuggestParameters which represents any black-box optimization algorithm. In this case, we use the Optuna python package \cite{akiba_optuna_2019} and the Tree-structed Parzen Estimator \cite{watanabe_tree-structured_2023} algorithm, which belongs to the class of Bayesian Optimization algorithms \cite{frazier2018tutorial}. At each iteration, this algorithm uses the information gained from the previous trials (the estimated optimality gaps) to propose a new parameter, balancing exploration and exploitation to efficiently find parameters corresponding to large optimality gaps. Since we will sample the parameters following a softmax distribution, identifying the parameters corresponding to the largest optimality gaps is paramount, as the probability of sampling from the rest of the parameter space will be negligible. 
To begin the minimization stage, we first construct the distribution $Q \in \Delta_{\mathcal{P}}$. Rather than constructing $H_{min}$ as a parameter itself, we introduce a scalar parameter $c \in (0,1)$ and define $H_{min} = c\log(n_{trials})$, such that $H_{min}$ is a given as a fraction of the maximum possible entropy of $Q$. With $H_{min}$ specified, we can find $\eta$ such that $H(Q(\eta)) = H_{min}$ using one-dimensional root finding  (e.g. bisection). Then we generate each dataset in a training batch by sampling a parameter $\theta_i \sim Q$ and generator $\phi \sim p(\Phi; \theta_i)$. We train the model for a fixed number of iterations and then return to the maximization stage. In our implementation, after five epochs we incorporate the original TabPFN model as another baseline model. This can potentially address regions of the parameter space where the model has reduced its performance by focusing on improving in other regions. An illustration of the maximum estimated optimality gap found during each maximization stage is shown in Figure \ref{fig:opt-gap}. We observe that after the original TFM is introduced as an additional baseline model, the parameter search uncovers larger optimality gaps for a few epochs, suggesting that the model did have some performance degradation during the initial training epochs over some regions of the parameter space.

The \model\ algorithm is highly parallelizable, both in the dataset generation steps and the parameter search. For a fixed parameterization $\theta$, each dataset generator $\phi \sim p(\Phi; \theta)$ is sampled independently from the same distribution, such that all datasets generated for a given batch can be created concurrently. The ParameterSearch algorithm can be completely parallelized within each trial. In our implementation, for each trial, we generate a batch of datasets and then fit and evaluate each model and dataset pair concurrently. For example, in our experiments, we set $n_d=20$ and used $e=7$ baseline models, resulting in $n_{ds} * e = 140$ separate processes distributed across all available cores.
\begin{algorithm}[ht!]
\caption{\modelfull\ (\model)}\label{alg:rtfm}
\begin{algorithmic}
\Require Initial TFM weights $\mathbf{W}$, baseline estimators $\{f_1, \dots, f_e\}$, $n_e$, $n_{iter}$, $n_t$, $n_d$, $c \in (0,1)$
\State $\{(\theta_i, \delta_i)\}_{i=1}^{n_{trials}} \gets \text{ParameterSearch}(\mathbf{W}, \{f_1, \dots, f_e\}, n_t, n_d)$
\State $H_{min} \gets c\log(n_{t})$
\State $\eta \gets \text{RootFinding}(H_{min}, \delta)$
\State $Q \gets \text{Softmax}(\eta \cdot \delta)$
\For{$e \in [n_e]$}
\For{$t \in [n_{iter}]$} 
% split into train/test sets
\State $\theta_1,\dots,\theta_{n_b} \sim Q$
\State $\phi_1,\dots, \phi_{n_b} \sim p(\Phi ; \theta_1), \dots,p(\Phi;\theta_{n_b})$
\State $\{(x^i, y^i)\}_{i=1}^{n_b} \sim p(\phi_1),\dots, p(\phi_{n_b})$ 
\State $\{x^i_{train}, x^i_{test}, y^i_{train}, y^i_{test}\}_{i=1}^{n_b} \gets \text{TrainTestSplit}(\{(x^i, y^i)\}_{i=1}^{n_b})$
\State $l(g_{\mathbf{W}}) \gets \mathcal{L}_{PFN}(g_{\mathbf{W}}, \{x^i_{train}, x^i_{test}, y^i_{train}, y^i_{test}\}_{i=1}^{n_b})$
\State $\mathbf{W} \gets \text{update}(\mathbf{W}, l(g_{\mathbf{W}}))$ \Comment{Gradient step}
\EndFor
\State $\{(\theta_i, \delta_i)\}_{i=1}^{n_{trials}} \gets \text{ParameterSearch}(\mathbf{W}, \{f_1, \dots, f_e\}, n_t, n_d)$
\State $\eta \gets \text{RootFinding}(H_{min}, \delta)$
\State $Q \gets \text{Softmax}(\eta \cdot \delta)$
\EndFor
\State \textbf{Return} Robust TFM model $g_{\mathbf{W}}$
\end{algorithmic}
\end{algorithm}

\begin{algorithm}[ht!]
\caption{ParameterSearch}\label{alg:parameter-search}
\begin{algorithmic}
\Require TFM weights $\mathbf{W}$, baseline estimators $\{f_1, \dots, f_e\}$, num. trials $n_t$, num. datasets $n_d$
\State $D_{\theta} = \{\}$
\For{$i \in [n_t]$}
\State $\theta_i \gets \text{SuggestParameters}(D_{\theta})$ \Comment{Black-box optimizer proposes new parameter}
\For{$j \in [n_d]$}
\State $\phi_j \sim p(\Phi ; \theta_i)$
\State $x^j_{train}, x^j_{test}, y^j_{train}, y^j_{test} \sim p(\phi_j)$
\State $l_j(g_{\mathbf{W}}) \gets \mathcal{L}_{PFN}(g_{\mathbf{W}}, x^j_{train}, y^j_{train}, x^j_{test}, y^j_{test})$
\State $l_b \gets \infty$
\For{$f_k \in \{f_1, \dots, f_e\}$}
% Emphasize here that some (most) of these models need to be fit
\State $l_j(f_k) \gets \mathcal{L}_{PFN}(f_k, x^j_{train}, y^j_{train}, x^j_{test}, y^j_{test})$
\State $l_b \gets \min(l_b, l_j(f_k))$
\EndFor
\State $\delta_i^j \gets l_j(g_{\mathbf{W}}) - l_b$ \Comment{Estimate optimality gap}
\EndFor
\State $\delta_i \gets \frac{1}{n_d}\sum_{j=1}^{n_d}\delta_i^j$
\State $D_{\theta} \gets D_{\theta} \cup \{(\theta_i, \delta_i)\}$
\EndFor
\State \textbf{Return} Parameters and estimated optimality gaps $D_{\theta}$
\end{algorithmic}
\end{algorithm}

\section{SCM Parameters}
\label{appendix:scm}
We summarize the parameter space used in our experiments in Table \ref{table:params}. We show the parameters corresponding to the maximum optimality gap found over the $n_{trials}$ per max-min iteration in Table \ref{table:dist-params}. For this work, we implemented SCMs as randomized MLPs. In addition to the structure of the MLP itself, we can control more general properties such as the number of features (node activations) that are included in the dataset, and the ratio of features which are categorical. An illustration of this implementation is shown in Figure \ref{fig:scm} . For integer-valued parameters, we sample from the truncated normal distribution over the range specified in the table, and then round to the nearest integer. For categorical variables, we use an $\epsilon\text{-greedy}$ sampling scheme. For example, we select the activation function $\mu_a$ with probability $1-\epsilon$ and with probability $\epsilon$ select another activation function at random. For missingness, we mask the training data completely at random. During the maximizataion stage, when we perform the parameter search, we make a few simplifications to reduce the size of the search space. We discretize the values for all numeric parameters. Rather than varying $\mu_h, \mu_l, \mu_{x_{in}}$ independently, we couple them and define five different MLP sizes $(\mu_h, \mu_l, \mu_{x_{in}}) = [(5,3,2),(10,5,3), (32,8,3),(64,10,8),(128,12,13)]$ to search over. The parameters for our experiments is then $\theta = (\mu_h, \mu_l, \mu_{x_{in}}, \mu_a, \mu_{z_x}, \mu_c, \mu_{r_{cat}}, \mu_{r_s}, \mu_m, \mu_d)$, and each parameter is sampled independently. Within our framework, we can easily add additional parameters, including those that control the variance or completely separate dimensions such as the type of SCM (MLP or tree-based).

\begin{figure}
    \centering
    \includegraphics[width=\linewidth]{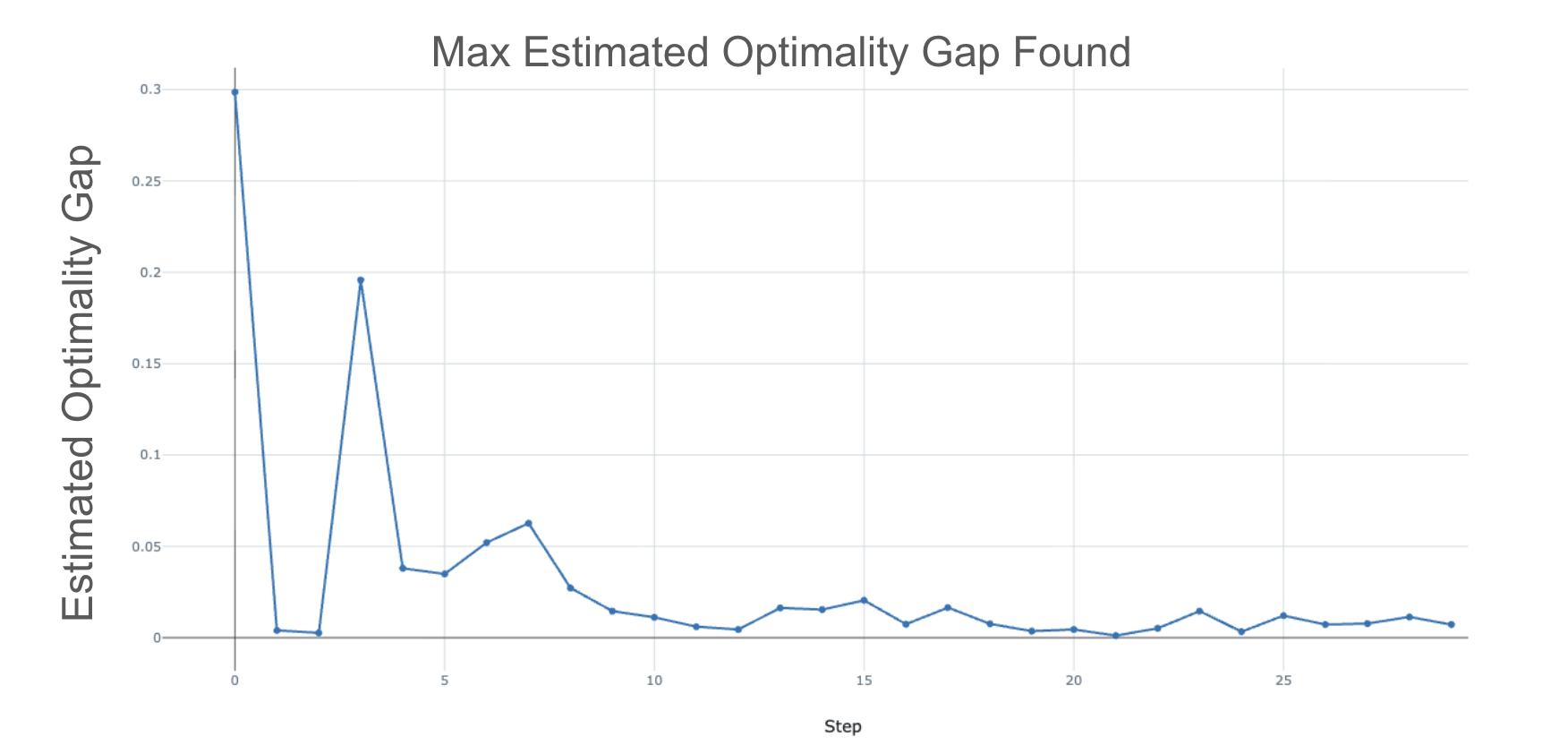}
    \caption{Maximum estimated optimality gap after each parameter search during model training. After epoch 5, the original TFM is introduced as an additional baseline model.}
    \label{fig:opt-gap}
\end{figure}

\begin{figure}[]
    \centering
    \includegraphics[width=\linewidth]{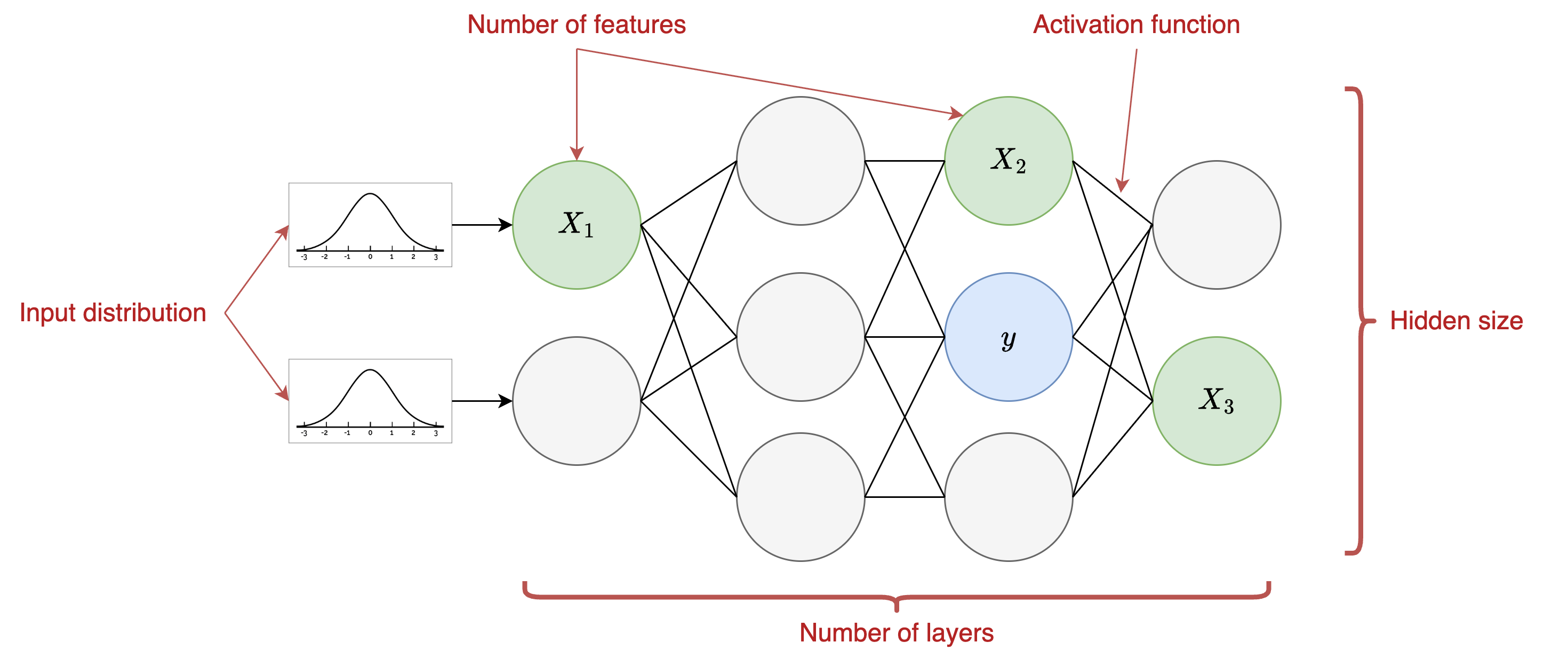}
    \caption{Anatomy of an MLP-based SCM.}
    \label{fig:scm}
\end{figure}

\begin{table*}[t]\label{table:params}
\centering
\caption{SCM Parameter space and sampling distributions. \label{table:params}}
\begin{adjustbox}{max width=1\textwidth}
\begin{tabular}{lccc}
\toprule
\textbf{Parameter} & \textbf{Type} & \textbf{Range} & \textbf{Distribution} \\
\midrule
Mean Hidden Size $\mu_h$ & integer & [3, 256] & $\text{TruncNorm}(\mu_h, 10)$ \\
Mean Num. Layers $\mu_l$& integer & [1, 12] & $\text{TruncNorm}(\mu_l, 1)$ \\
Mean Num. Inputs $\mu_{x_{in}}$ & integer & [1, 15] & $\text{TruncNorm}(\mu_{x_{in}}, 1)$\\
Activation fn $\mu_a$ & categorical & ["relu","elu","identity","tanh"] & $\epsilon\text{-greedy}$ ($\epsilon = 0.3$)\\
Mean Num. Features $\mu_{z_x}$ & integer & [2, 200] & $\text{TruncNorm}(\mu_{z_x}, \mu_{z_x}/4)$\\
Mean Num. Classes $\mu_{c}$ & integer & [2,10] & $\text{TruncNorm}(\mu_m, 1)$\\
Mean categorical ratio $\mu_{r_{cat}}$ & continuous & [0, 1] & $\text{TruncNorm}(\mu_{r_{cat}}, 0.1)$\\
Mean ordered cat. ratio $\mu_{r_s}$ & continuous & [0,1] & $\text{TruncNorm}(\mu_{r_s}, 0.1)$\\
Mean ratio missing $\mu_m$ & continuous & [0,1] & $\text{TruncNorm}(\mu_m, 0.1)$\\
Input distribution $\mu_d$ & categorical & ["exponential", "uniform", "normal"] & $\epsilon\text{-greedy}$ ($\epsilon = 0.3$)\\
\bottomrule
\end{tabular}
\end{adjustbox}
\end{table*}

\section{Proof of Optimal Maximizer Distribution}\label{sec:proofs}
In this section, we provide the proof that the optimal distribution for the maximizer in equation \ref{eq:dro-objective} is a softmax distribution such that $q_i^* \propto \exp(\eta \cdot \delta_{\theta_i}(\mathbf{W}))$.

\begin{proposition}
Let $\delta =(\delta_{\theta_1}, \dots, \delta_{\theta_n})\in\mathbb{R}^n$ and consider
\[
\max_{Q\in\Delta_n}\ \sum_{i=1}^n q_i\delta_{\theta_i}
\quad\text{s.t.}\quad
H(Q)\ge H_{\min},
\]
where $Q = (q_1,\dots, q_n)$, $\Delta_n=\{Q\in\mathbb{R}^n:\ q_i\ge0,\ \sum_i q_i=1\}$ and $H(Q)=-\sum_i q_i\log q_i$. If $0<H_{\min}<\log n$ and $\exists i,j$ such that $\delta_{\theta_i} \neq \delta_{\theta_j}$, the optimizer $q^*$ is strictly positive and there exists a unique $\lambda>0$ such that
\[
q_i^*=\frac{\exp(\delta_{\theta_i}/\lambda)}{\sum_{j=1}^n\exp(\delta_{\theta_j}/\lambda)},\qquad i=1,\dots,n,
\]
where $\lambda$ is determined implicitly by the entropy constraint $H(Q^*)=H_{\min}$.
\end{proposition}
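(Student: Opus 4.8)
The plan is to recognize this as a concave maximization problem and solve it via Lagrangian duality, with the softmax emerging from the stationarity conditions and the uniqueness of $\lambda$ coming from strict monotonicity of the entropy as a function of the inverse temperature.

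First I would establish existence and verify that the KKT conditions apply. The objective $Q \mapsto \sum_i q_i \delta_{\theta_i}$ is linear (hence concave), the map $Q \mapsto H(Q)$ is strictly concave, and the feasible set $\{Q \in \Delta_n : H(Q) \ge H_{\min}\}$ is a nonempty, convex, compact subset of the simplex, since the uniform distribution $u$ satisfies $H(u) = \log n > H_{\min}$ and thus gives a strictly feasible point (Slater's condition). Existence of a maximizer $Q^*$ then follows from Weierstrass, and because the program is concave with a strictly feasible point, strong duality holds and KKT is necessary and sufficient.

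Next I would form the Lagrangian $\mathcal L(Q,\lambda,\nu) = \sum_i q_i\delta_{\theta_i} + \lambda\bigl(H(Q) - H_{\min}\bigr) - \nu\bigl(\sum_i q_i - 1\bigr)$ with $\lambda \ge 0$. Two observations drive the argument. First, the multiplier $\lambda$ must be strictly positive: if $\lambda = 0$, stationarity forces $\delta_{\theta_i} = \nu$ for all $i$, contradicting the hypothesis that $\delta_{\theta_i} \neq \delta_{\theta_j}$ for some $i,j$; complementary slackness then yields $H(Q^*) = H_{\min}$, so the entropy constraint binds. Second, for $\lambda > 0$ the penalized objective $\sum_i q_i\delta_{\theta_i} + \lambda H(Q)$ is strictly concave and its partial derivative in $q_k$, namely $\delta_{\theta_k} - \lambda(\log q_k + 1)$, tends to $+\infty$ as $q_k \to 0^+$; hence the maximizer is interior, which proves $q^* > 0$ coordinatewise, and stationarity $\delta_{\theta_i} - \lambda(\log q_i + 1) = \nu$ rearranges to the claimed form $q_i^* \propto \exp(\delta_{\theta_i}/\lambda)$ after absorbing the normalization constant.

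Finally I would prove that the $\lambda$ determined by $H(Q^*) = H_{\min}$ is unique. Writing $\eta = 1/\lambda$ and $Q(\eta)$ for the softmax family, the log-partition $\log Z(\eta) = \log\sum_j e^{\eta\delta_{\theta_j}}$ is the cumulant generating function of $\delta$, so $H(Q(\eta)) = \log Z(\eta) - \eta\,\mathbb E_{Q(\eta)}[\delta]$ and a short computation gives $\frac{d}{d\eta}H(Q(\eta)) = -\eta\,\mathrm{Var}_{Q(\eta)}(\delta)$. Since every coordinate of $Q(\eta)$ is positive and not all $\delta_{\theta_i}$ coincide, the variance is strictly positive, so $H(Q(\eta))$ is strictly decreasing on $(0,\infty)$ (this is exactly the monotonicity asserted in the main text); as $\eta \to 0^+$ it tends to uniform with $H \to \log n$, so by continuity and strict monotonicity there is a unique $\eta$, equivalently a unique $\lambda > 0$, solving $H(Q(\eta)) = H_{\min}$. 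The main obstacle I anticipate is the careful justification that the constraint is active together with strict positivity of $q^*$: both hinge on ruling out boundary optima, which is precisely where the $\delta_{\theta_i}\neq\delta_{\theta_j}$ hypothesis and the blow-up of $\nabla H$ at the simplex boundary do the work. A secondary subtlety is the attainable range of $H(Q(\eta))$ as $\eta \to \infty$, whose limit equals $\log m$ with $m$ the number of maximizers of $\delta$; the stated softmax characterization holds cleanly when $m = 1$ (so $H_{\min}\in(0,\log n)$ suffices), and I would flag that distinct optimality gaps, as in the intended setting, guarantee this.
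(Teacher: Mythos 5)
Your proposal is correct in substance and follows essentially the same route as the paper's proof: Slater's condition via the uniform distribution, KKT stationarity yielding the softmax form, and uniqueness of the temperature from strict monotonicity of the entropy along the softmax family, established by the same variance identity (your $\tfrac{d}{d\eta}H(Q(\eta)) = -\eta\,\mathrm{Var}_{Q(\eta)}(\delta)$ with $\eta = 1/\lambda$ is exactly the paper's $\tfrac{d}{d\lambda}H(Q(\lambda)) = \mathrm{Var}_{Q(\lambda)}(\delta)/\lambda^{3}$ after the change of variables). The one place you genuinely go beyond the paper is the treatment of activeness: the paper's proof simply conditions on the entropy constraint being active (``when the entropy constraint is active ($\lambda>0$)\dots'') and never rules out $\lambda=0$, whereas you attempt to. Note, however, that your contradiction argument for $\lambda=0$ is incomplete as written: since your Lagrangian omits the nonnegativity multipliers, stationarity at a boundary point only forces $\delta_{\theta_i}=\nu$ on the \emph{support} of $Q^{*}$, not on all coordinates. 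This is not a technicality --- it is precisely the failure mode you flag in your closing paragraph: if the maximum of $\delta$ is attained at $m\ge 2$ coordinates and $H_{\min}\le\log m$, then a distribution supported on the argmax set (e.g., uniform over it) is feasible and attains the unconstrained LP optimum, so the entropy constraint need not bind and the strictly positive softmax conclusion fails; the hypothesis $\exists i,j:\ \delta_{\theta_i}\neq\delta_{\theta_j}$ does not exclude this case. So your caveat is accurate, it identifies a gap that the paper's own proof passes over silently, and the clean fix is the one you name: assume a unique maximizer of $\delta$ (or $H_{\min}>\log m$), which holds in the intended setting of distinct estimated optimality gaps.
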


\begin{proof}
Defining $ F(Q)=-\sum_i q_i\delta_{\theta_i}$, we rewrite the problem as
\[
\min_{Q\in\Delta_n}F(Q)\ 
\quad\text{s.t.}\quad g(Q):=-H(Q)+H_{\min}\le0.
\]
Since $F$ and $g$ are convex and the feasible region has interior points (e.g., the uniform distribution), Slater’s condition holds, so KKT conditions are necessary and sufficient.

The Lagrangian is given by
\begin{align*}
\mathcal{L}(Q,\lambda,\mu,\gamma) = & \\
& -\sum_i q_i\delta_{\theta_i} + \lambda\!\left(\sum_i q_i\log q_i + H_{\min}\right)\\
& + \mu\!\left(\sum_i q_i-1\right) - \sum_i\gamma_i q_i,\\
\end{align*}
where $\lambda\ge0$, $\mu\in\mathbb{R}$, and $\gamma_i\ge0$. Stationarity gives
\[
-\delta_{\theta_i} + \lambda(1+\log q_i) + \mu - \gamma_i = 0.
\]
When the entropy constraint is active ($\lambda>0$) and $q_i>0$, complementary slackness implies $\gamma_i=0$, yielding
\[
\lambda(1+\log q_i)=\delta_{\theta_i}-\mu.
\]
Exponentiating and enforcing $\sum_i q_i=1$ gives
\[
q_i(\lambda)=\frac{\exp(\delta_{\theta_i}/\lambda)}{\sum_j\exp(\delta_{\theta_j}/\lambda)}.
\]
Because $\lambda>0$, each $q_i>0$ and all $\gamma_i=0$, so the solution is interior. We now prove the uniqueness of  $\lambda$ solving $H(Q(\lambda)) = H_{min}$.  Let \(q_i(\lambda)=\dfrac{\exp(\delta_{\theta_i}/\lambda)}{Z(\lambda)}\), \(Z(\lambda)=\sum_j\exp(\delta_{\theta_j}/\lambda)\),
and write \(E(\lambda)=\mathbb{E}_{Q(\lambda)}[\delta]=\sum_i q_i(\lambda)\delta_{\theta_i}\),
\(E_2(\lambda)=\mathbb{E}_{q(\lambda)}[\delta^2]=\sum_i q_i(\lambda)\delta_{\theta_i}^2\).
The entropy of \(q(\lambda)\) can be written
\[
H(q(\lambda))=\log Z(\lambda)-\frac{E(\lambda)}{\lambda}.
\]
Differentiate \( \log Z \) with respect to $\lambda$, we have
\[
\frac{d}{d\lambda}\log Z(\lambda)
= \frac{1}{Z(\lambda)}\sum_i \exp(\delta_{\theta_i}/\lambda)\cdot\big(-\delta_{\theta_i}/\lambda^2\big)
= -\frac{E(\lambda)}{\lambda^2}.
\]
Next compute \(dE/d\lambda\). From \(\log q_i(\lambda)=\delta_{\theta_i}/\lambda-\log Z(\lambda)\) we get
\[
\frac{d q_i}{d\lambda}
= q_i\frac{d}{d\lambda}\log q_i
= q_i\Big(-\frac{\delta_{\theta_i}}{\lambda^2}-\frac{d}{d\lambda}\log Z(\lambda)\Big).
\]
Using this, we find
\begin{align*}
\frac{dE}{d\lambda}
& = \sum_i \delta_{\theta_i} \frac{d q_i}{d\lambda}\\
&= -\frac{1}{\lambda^2}\sum_i q_i\delta_{\theta_i}^2 - \frac{d}{d\lambda}\log Z(\lambda)\sum_i q_i\delta_{\theta_i}\\
& = -\frac{E_2(\lambda)}{\lambda^2} - \Big(-\frac{E(\lambda)}{\lambda^2}\Big)E(\lambda)\\
&= -\frac{E_2(\lambda)-E(\lambda)^2}{\lambda^2}.
\end{align*}
Therefore
\[
\frac{dE}{d\lambda} = -\frac{\operatorname{Var}_{Q(\lambda)}(\delta)}{\lambda^2}.
\]
Finally, differentiate \(H(Q(\lambda))=\log Z(\lambda)-E(\lambda)/\lambda\):
\[
\frac{d}{d\lambda}H(Q(\lambda))
= \frac{d}{d\lambda}\log Z(\lambda) - \frac{1}{\lambda}\frac{dE}{d\lambda} + \frac{E(\lambda)}{\lambda^2}.
\]
Substituting \(\dfrac{d}{d\lambda}\log Z(\lambda)=-\dfrac{E(\lambda)}{\lambda^2}\) cancels the last term, leaving
\[
\frac{d}{d\lambda}H(Q(\lambda)) = -\frac{1}{\lambda}\frac{dE}{d\lambda}
= \frac{\operatorname{Var}_{Q(\lambda)}(\delta)}{\lambda^3}.
\]
Since \(\operatorname{Var}_{Q(\lambda)}(\delta)\ge0\) and \(\lambda>0\), the derivative is nonnegative and is strictly positive unless all \(\delta_{\theta_i}\) are equal. Hence \(\lambda\mapsto H(Q(\lambda))\) is strictly increasing on \((0,\infty)\), which implies the uniqueness of \(\lambda\) solving \(H(q(\lambda))=H_{\min}\). By KKT sufficiency, $Q^*=Q(\lambda)$ is the global optimum.
\end{proof}

\section{Full Experiment Results}\label{appendix:results}
In Tables \ref{table:tabpertnet} and \ref{table:tabarena} we provide the full experiment results from the TabPertNet \cite{ye_towards_2024} and TabArena \cite{erickson_tabarena_2025} tabular benchmarks. For both benchmarks, we evaluate on classification datasets only, and limit our evaluation to datasets with at most $n=10k$ samples and $p=500$ features. We hold out some datasets from both benchmarks as validation datasets during model training. We report our test results on $N=74$ datasets from TabPertNet and $N=21$ datasets from TabArena. For TabArena, we use the same folds and repeats as defined in the original work and corresponding leaderboard. For TabPertNet, we use the same train/test split used in the original paper. We report AUC for TabPertNet and AUC one-vs-one (OVO) for TabArena.

\begin{table*}[h]
\centering
\caption{TabPertNet \cite{ye_towards_2024} Test Results (AUC $\uparrow$).\label{table:tabpertnet}}
\setlength{\tabcolsep}{2pt} % reduce horizontal padding
% \small % smaller font size
\begin{adjustbox}{width=1\textwidth,height=4.8in} % scale table if still too wide
\begin{tabular}{cccccccc}
\toprule
 \textbf{Dataset} & \textbf{TabPFN\textsubscript{n.e.} (RTFM)} & \textbf{TabPFN\textsubscript{n.e.} (Base)} & \textbf{CatBoost} & \textbf{Random Forest} & \textbf{XGBoost} & \textbf{Log. Reg.} & \textbf{MLP} \\
\midrule
Breast\_Cancer & 0.886 & 0.885 & 0.830 & 0.820 & 0.773 & 0.852 & 0.816 \\
1736\_combined-wine-data & 1.000 & 1.000 & 1.000 & 1.000 & 1.000 & 1.000 & 1.000 \\
0446\_newton\_hema & 0.976 & 0.952 & 0.976 & 0.976 & 1.000 & 1.000 & 0.952 \\
1564\_Mammographic-Mass-Data-Set & 0.938 & 0.934 & 0.843 & 0.919 & 0.857 & 0.916 & 0.892 \\
0408\_pharynx & 0.911 & 0.800 & 0.778 & 0.844 & 0.867 & 0.867 & 0.889 \\
1600\_VulNoneVul & 0.594 & 0.554 & 0.693 & 0.546 & 0.640 & 0.637 & 0.549 \\
Customer\_Behaviour & 0.991 & 0.994 & 0.734 & 0.818 & 0.837 & 0.855 & 0.852 \\
1333\_ricci\_vs\_destefano & 1.000 & 1.000 & 0.708 & 0.917 & 1.000 & 0.875 & 0.875 \\
1635\_Is-this-a-good-customer & 0.789 & 0.796 & 0.750 & 0.767 & 0.788 & 0.693 & 0.639 \\
1408\_national-longitudinal-survey-binary & 1.000 & 1.000 & 0.991 & 0.988 & 0.962 & 0.995 & 0.998 \\
b\_depressed & 0.551 & 0.541 & 0.513 & 0.567 & 0.606 & 0.446 & 0.614 \\
0356\_delta\_elevators & 0.951 & 0.953 & 0.950 & 0.955 & 0.947 & 0.945 & 0.945 \\
0437\_quake & 0.518 & 0.504 & 0.496 & 0.507 & 0.480 & 0.603 & 0.606 \\
1461\_heart-failure & 0.856 & 0.867 & 0.856 & 0.833 & 0.839 & 0.944 & 0.867 \\
0124\_analcatdata\_impeach & 1.000 & 1.000 & 0.550 & 0.750 & 0.850 & 0.900 & 1.000 \\
2304\_electricity & 0.891 & 0.890 & 0.870 & 0.856 & 0.869 & 0.699 & 0.790 \\
0446\_arsenic-female-bladder & 0.811 & 0.819 & 0.776 & 0.825 & 0.774 & 0.825 & 0.887 \\
0948\_Ishwar & 0.997 & 0.999 & 0.961 & 0.966 & 0.968 & 0.961 & 0.973 \\
2393\_airlines & 0.650 & 0.661 & 0.572 & 0.645 & 0.592 & 0.589 & 0.567 \\
0400\_analcatdata\_supreme & 1.000 & 1.000 & 1.000 & 1.000 & 1.000 & 0.996 & 0.999 \\
2619\_sf-police-incidents & 0.584 & 0.515 & 0.572 & 0.449 & 0.407 & 0.454 & 0.477 \\
2308\_electricity & 0.920 & 0.913 & 0.893 & 0.885 & 0.895 & 0.732 & 0.842 \\
0885\_compas-two-years & 0.744 & 0.746 & 0.692 & 0.735 & 0.659 & 0.729 & 0.733 \\
1201\_Gender-Recognition-by-Voice & 1.000 & 0.999 & 0.998 & 0.999 & 0.999 & 0.995 & 0.999 \\
0546\_analcatdata\_seropositive & 1.000 & 1.000 & 0.833 & 0.861 & 1.000 & 0.972 & 0.944 \\
0345\_delta\_ailerons & 0.977 & 0.976 & 0.972 & 0.973 & 0.976 & 0.967 & 0.971 \\
0445\_arsenic-male-bladder & 0.983 & 0.991 & 0.784 & 0.991 & 0.957 & 0.983 & 0.595 \\
2389\_airlines & 0.678 & 0.699 & 0.663 & 0.646 & 0.637 & 0.656 & 0.588 \\
0424\_autoPrice & 1.000 & 1.000 & 0.958 & 1.000 & 1.000 & 0.938 & 0.938 \\
0419\_pm10 & 0.628 & 0.591 & 0.565 & 0.637 & 0.675 & 0.404 & 0.537 \\
2703\_compas-two-years & 0.763 & 0.761 & 0.731 & 0.756 & 0.704 & 0.752 & 0.749 \\
1774\_Early-Stage-Diabetes & 1.000 & 1.000 & 1.000 & 1.000 & 1.000 & 0.997 & 0.999 \\
1006\_Titanic & 0.742 & 0.742 & 0.742 & 0.742 & 0.742 & 0.662 & 0.742 \\
1752\_Wisconsin-breast-cancer & 1.000 & 1.000 & 0.997 & 0.999 & 0.999 & 1.000 & 1.000 \\
0541\_plasma\_retinol & 0.688 & 0.693 & 0.315 & 0.562 & 0.594 & 0.506 & 0.574 \\
1011\_cleve & 0.913 & 0.909 & 0.856 & 0.928 & 0.923 & 0.918 & 0.952 \\
0472\_analcatdata\_marketing & 0.714 & 0.663 & 0.566 & 0.480 & 0.437 & 0.383 & 0.714 \\
0447\_arsenic-female-lung & 0.948 & 0.888 & 0.784 & 0.888 & 0.970 & 0.276 & 0.293 \\
2391\_airlines & 0.616 & 0.628 & 0.665 & 0.661 & 0.629 & 0.608 & 0.558 \\
2392\_airlines & 0.612 & 0.601 & 0.665 & 0.627 & 0.647 & 0.691 & 0.696 \\
0555\_socmob & 0.996 & 0.998 & 0.951 & 0.957 & 0.960 & 0.932 & 0.959 \\
2622\_sf-police-incidents & 0.502 & 0.456 & 0.398 & 0.474 & 0.561 & 0.504 & 0.469 \\
0406\_visualizing\_environmental & 1.000 & 1.000 & 1.000 & 1.000 & 1.000 & 1.000 & 1.000 \\
diabetes\_data\_upload & 1.000 & 0.998 & 1.000 & 1.000 & 1.000 & 0.989 & 0.992 \\
0312\_cpu\_act & 0.981 & 0.981 & 0.976 & 0.977 & 0.971 & 0.956 & 0.979 \\
1458\_kdd\_ipums\_la\_97-small & 0.942 & 0.942 & 0.932 & 0.943 & 0.930 & 0.881 & 0.540 \\
piracydataset & 0.678 & 0.604 & 0.684 & 0.737 & 0.656 & 0.770 & 0.778 \\
2620\_sf-police-incidents & 0.531 & 0.492 & 0.367 & 0.541 & 0.425 & 0.491 & 0.517 \\
2306\_electricity & 0.879 & 0.882 & 0.870 & 0.862 & 0.863 & 0.771 & 0.838 \\
2621\_sf-police-incidents & 0.611 & 0.579 & 0.634 & 0.350 & 0.628 & 0.630 & 0.516 \\
2390\_airlines & 0.600 & 0.597 & 0.590 & 0.574 & 0.560 & 0.584 & 0.516 \\
Bank\_Personal\_Loan\_Modelling & 0.589 & 0.582 & 0.605 & 0.594 & 0.606 & 0.604 & 0.562 \\
1512\_eye\_movements & 0.652 & 0.650 & 0.701 & 0.686 & 0.700 & 0.594 & 0.660 \\
NFL & 0.632 & 0.595 & 1.000 & 1.000 & 1.000 & 1.000 & 0.977 \\
2305\_electricity& 0.928 & 0.931 & 0.920 & 0.903 & 0.911 & 0.788 & 0.858 \\
1592\_Diabetes-Data-Set & 0.882 & 0.882 & 0.871 & 0.884 & 0.874 & 0.839 & 0.844 \\
0509\_pollen & 0.472 & 0.467 & 0.451 & 0.481 & 0.485 & 0.478 & 0.502 \\
1413\_shill-bidding & 1.000 & 1.000 & 1.000 & 1.000 & 0.999 & 0.999 & 0.996 \\
BankNoteAuthentication & 1.000 & 1.000 & 1.000 & 1.000 & 1.000 & 0.999 & 1.000 \\
0284\_bank8FM & 0.992 & 0.991 & 0.989 & 0.991 & 0.988 & 0.991 & 0.991 \\
0292\_cpu\_small & 0.980 & 0.979 & 0.978 & 0.979 & 0.974 & 0.949 & 0.972 \\
1742\_Loan-Predication & 0.729 & 0.698 & 0.669 & 0.671 & 0.703 & 0.692 & 0.657 \\
Employee Satisfaction Index & 0.574 & 0.538 & 0.463 & 0.484 & 0.522 & 0.468 & 0.429 \\
1759\_Red--White-wine-Dataset & 1.000 & 1.000 & 1.000 & 1.000 & 1.000 & 1.000 & 1.000 \\
0526\_colleges\_aaup & 1.000 & 1.000 & 0.918 & 0.913 & 0.939 & 0.958 & 0.960 \\
bt\_dataset\_t3 & 1.000 & 1.000 & 1.000 & 1.000 & 1.000 & 0.970 & 0.997 \\
0435\_strikes & 1.000 & 1.000 & 1.000 & 0.995 & 0.991 & 0.616 & 0.903 \\
UniversalBank & 0.589 & 0.582 & 0.605 & 0.594 & 0.609 & 0.604 & 0.579 \\
1692\_Gender-Classification-Dataset & 0.996 & 0.997 & 0.994 & 0.996 & 0.996 & 0.996 & 0.996 \\
1142\_Sick\_numeric & 0.999 & 0.998 & 0.999 & 0.999 & 0.997 & 0.949 & 0.943 \\
1451\_early-stage-diabetes & 1.000 & 1.000 & 1.000 & 1.000 & 1.000 & 0.997 & 1.000 \\
TravelInsurancePrediction & 0.823 & 0.811 & 0.812 & 0.802 & 0.812 & 0.786 & 0.815 \\
1898\_Personal-Loan-Modeling & 0.590 & 0.611 & 0.633 & 0.630 & 0.633 & 0.627 & 0.601 \\
new\_model & 1.000 & 1.000 & 1.000 & 1.000 & 1.000 & 1.000 & 1.000 \\
1578\_kdd\_ipums\_la\_97-small & 0.957 & 0.952 & 0.947 & 0.958 & 0.952 & 0.906 & 0.534 \\
loan\_train & 0.731 & 0.672 & 0.657 & 0.682 & 0.695 & 0.671 & 0.645 \\
\bottomrule
\end{tabular}
\end{adjustbox}
\end{table*}

\begin{table*}[h]\label{table:tabarena}
\centering
\caption{TabArena \cite{erickson_tabarena_2025} Test Results (AUC OVO $\uparrow$). \label{table:tabarena}}
\setlength{\tabcolsep}{3pt} % reduce horizontal padding
% \small % smaller font size
\begin{adjustbox}{max width=1\textwidth} % scale table if still too wide
\begin{tabular}{cccccccc}
\toprule
 \textbf{Dataset} & \textbf{TabPFN\textsubscript{n.e.} (RTFM)} & \textbf{TabPFN\textsubscript{n.e.} (Base)} & \textbf{CatBoost} & \textbf{Random Forest} & \textbf{XGBoost} & \textbf{Log. Reg.} & \textbf{MLP} \\
\midrule
Bank\_Customer\_Churn & $0.861 \pm 0.007$ & $0.870 \pm 0.007$ & $0.843 \pm 0.008$ & $0.848 \pm 0.007$ & $0.831 \pm 0.007$ & $0.838 \pm 0.008$ & $0.798 \pm 0.009$  \\
Fitness\_Club & $0.821 \pm 0.012$ & $0.821 \pm 0.012$ & $0.748 \pm 0.020$ & $0.745 \pm 0.020$ & $0.718 \pm 0.021$ & $0.780 \pm 0.018$ & $0.701 \pm 0.022$  \\
Is-this-a-good-customer & $0.743 \pm 0.018$ & $0.738 \pm 0.023$ & $0.677 \pm 0.025$ & $0.725 \pm 0.019$ & $0.679 \pm 0.021$ & $0.677 \pm 0.024$ & $0.618 \pm 0.029$  \\
MIC & $0.670 \pm 0.011$ & $0.660 \pm 0.012$ & $0.674 \pm 0.019$ & $0.671 \pm 0.020$ & $0.667 \pm 0.022$ & $0.700 \pm 0.025$ & $0.659 \pm 0.020$  \\
Marketing\_Campaign & $0.902 \pm 0.016$ & $0.900 \pm 0.017$ & $0.834 \pm 0.017$ & $0.837 \pm 0.021$ & $0.819 \pm 0.020$ & $0.848 \pm 0.019$ & $0.810 \pm 0.018$  \\
NATICUSdroid & $0.984 \pm 0.002$ & $0.983 \pm 0.002$ & $0.984 \pm 0.002$ & $0.985 \pm 0.002$ & $0.976 \pm 0.003$ & $0.981 \pm 0.002$ & $0.978 \pm 0.003$  \\
anneal & $0.994 \pm 0.010$ & $0.999 \pm 0.003$ & $0.983 \pm 0.023$ & $0.993 \pm 0.011$ & $0.995 \pm 0.011$ & $0.995 \pm 0.007$ & $0.998 \pm 0.003$  \\
blood-transfusion-service-center & $0.749 \pm 0.029$ & $0.748 \pm 0.027$ & $0.673 \pm 0.033$ & $0.738 \pm 0.029$ & $0.681 \pm 0.026$ & $0.752 \pm 0.027$ & $0.706 \pm 0.070$  \\
churn & $0.926 \pm 0.012$ & $0.926 \pm 0.010$ & $0.920 \pm 0.009$ & $0.923 \pm 0.012$ & $0.916 \pm 0.008$ & $0.820 \pm 0.013$ & $0.828 \pm 0.012$  \\
coil2000\_insurance\_policies & $0.766 \pm 0.012$ & $0.757 \pm 0.013$ & $0.711 \pm 0.013$ & $0.748 \pm 0.012$ & $0.691 \pm 0.014$ & $0.714 \pm 0.011$ & $0.682 \pm 0.012$  \\
credit-g & $0.779 \pm 0.020$ & $0.766 \pm 0.017$ & $0.755 \pm 0.023$ & $0.770 \pm 0.020$ & $0.771 \pm 0.020$ & $0.761 \pm 0.023$ & $0.723 \pm 0.027$  \\
diabetes & $0.838 \pm 0.023$ & $0.840 \pm 0.024$ & $0.796 \pm 0.020$ & $0.833 \pm 0.024$ & $0.820 \pm 0.023$ & $0.829 \pm 0.024$ & $0.721 \pm 0.035$  \\
hazelnut-spread-contaminant-detection & $0.986 \pm 0.003$ & $0.986 \pm 0.004$ & $0.974 \pm 0.005$ & $0.967 \pm 0.005$ & $0.957 \pm 0.006$ & $0.608 \pm 0.047$ & $0.849 \pm 0.014$  \\
maternal\_health\_risk & $0.945 \pm 0.013$ & $0.941 \pm 0.012$ & $0.939 \pm 0.014$ & $0.935 \pm 0.014$ & $0.941 \pm 0.014$ & $0.800 \pm 0.018$ & $0.777 \pm 0.024$  \\
polish\_companies\_bankruptcy & $0.956 \pm 0.006$ & $0.955 \pm 0.007$ & $0.952 \pm 0.007$ & $0.958 \pm 0.010$ & $0.933 \pm 0.006$ & $0.552 \pm 0.041$ & $0.650 \pm 0.087$  \\
qsar-biodeg & $0.938 \pm 0.012$ & $0.934 \pm 0.012$ & $0.923 \pm 0.013$ & $0.929 \pm 0.012$ & $0.928 \pm 0.012$ & $0.925 \pm 0.013$ & $0.917 \pm 0.013$  \\
seismic-bumps & $0.771 \pm 0.023$ & $0.768 \pm 0.029$ & $0.661 \pm 0.024$ & $0.746 \pm 0.016$ & $0.676 \pm 0.016$ & $0.713 \pm 0.026$ & $0.639 \pm 0.029$  \\
splice & $0.995 \pm 0.002$ & $0.995 \pm 0.002$ & $0.994 \pm 0.002$ & $0.994 \pm 0.002$ & $0.994 \pm 0.002$ & $0.989 \pm 0.002$ & $0.991 \pm 0.002$  \\
students\_dropout\_and\_academic\_success & $0.883 \pm 0.007$ & $0.880 \pm 0.005$ & $0.866 \pm 0.006$ & $0.859 \pm 0.007$ & $0.864 \pm 0.009$ & $0.844 \pm 0.005$ & $0.812 \pm 0.007$  \\
taiwanese\_bankruptcy\_prediction & $0.944 \pm 0.007$ & $0.943 \pm 0.005$ & $0.932 \pm 0.010$ & $0.937 \pm 0.007$ & $0.923 \pm 0.012$ & $0.576 \pm 0.027$ & $0.521 \pm 0.015$  \\
website\_phishing & $0.965 \pm 0.007$ & $0.974 \pm 0.005$ & $0.972 \pm 0.005$ & $0.972 \pm 0.005$ & $0.969 \pm 0.005$ & $0.923 \pm 0.009$ & $0.969 \pm 0.006$  \\
\bottomrule
\end{tabular}
\end{adjustbox}
\end{table*}

\begin{table*}[h]\label{table:dist-params}
\centering
\caption{Mean parameter values corresponding to the maximum estimated optimality gap for each max-min iteration of the RTFM algorithm. We observe significant variation in these parameters across iterations.}\label{table:dist-params}
\begin{tabular}{|c|cccccccccc|}
\hline
max-min iter. & $\mu_l$ & $\mu_h$ & $\mu_{x_{in}}$ & $\mu_{z_x}$ & $\mu_{r_{s}}$ & $\mu_c$ & $\mu_a$ & $\mu_{r_{cat}}$ & $\mu_{m}$ & $\mu_d$ \\
\hline
1  & 5 & 10 & 3 & 5 & 1.0 & 8 & tanh & 0.9 & 0.4 & uniform \\
2  & 3 & 5 & 2 & 5 & 0.1 & 2 & identity & 1.0 & 0.4 & uniform \\
3  & 12 & 128 & 13 & 200 & 0.0 & 2 & identity & 0.4 & 0.0 & uniform \\
4  & 3 & 5 & 2 & 6 & 0.6 & 2 & tanh & 1.0 & 0.0 & exponential \\
5  & 10 & 64 & 8 & 32 & 0.4 & 2 & tanh & 1.0 & 0.0 & uniform \\
6  & 8 & 32 & 5 & 13 & 0.8 & 10 & identity & 0.0 & 0.2 & normal \\
7  & 10 & 64 & 8 & 162 & 0.9 & 6 & elu & 0.6 & 0.0 & normal \\
8  & 5 & 10 & 3 & 13 & 0.0 & 10 & tanh & 0.8 & 0.0 & uniform \\
9  & 5 & 10 & 3 & 5 & 0.3 & 6 & identity & 0.6 & 0.0 & exponential \\
10 & 5 & 10 & 3 & 5 & 0.1 & 10 & tanh & 0.5 & 0.0 & normal \\
11 & 3 & 5 & 2 & 5 & 0.0 & 8 & relu & 1.0 & 0.0 & normal \\
12 & 3 & 5 & 2 & 5 & 0.6 & 4 & relu & 0.6 & 0.0 & exponential \\
13 & 3 & 5 & 2 & 5 & 0.4 & 6 & tanh & 0.6 & 0.0 & normal \\
14 & 5 & 10 & 3 & 5 & 0.8 & 8 & identity & 0.8 & 0.0 & exponential \\
15 & 3 & 5 & 2 & 5 & 1.0 & 8 & elu & 0.6 & 0.0 & uniform \\
16 & 3 & 5 & 2 & 6 & 0.7 & 6 & identity & 0.3 & 0.0 & uniform \\
17 & 8 & 32 & 5 & 91 & 0.0 & 2 & tanh & 1.0 & 0.0 & exponential \\
18 & 3 & 5 & 2 & 5 & 0.0 & 8 & tanh & 1.0 & 0.0 & exponential \\
19 & 3 & 5 & 2 & 6 & 0.2 & 6 & tanh & 0.2 & 0.0 & exponential \\
20 & 12 & 128 & 13 & 200 & 0.2 & 2 & identity & 0.1 & 0.0 & exponential \\
21 & 3 & 5 & 2 & 6 & 0.9 & 8 & relu & 0.6 & 0.0 & exponential \\
22 & 3 & 5 & 2 & 5 & 0.3 & 8 & identity & 0.2 & 0.0 & uniform \\
23 & 5 & 10 & 3 & 5 & 0.2 & 4 & tanh & 0.3 & 0.0 & normal \\
24 & 3 & 5 & 2 & 5 & 0.8 & 8 & identity & 0.1 & 0.0 & uniform \\
25 & 3 & 5 & 2 & 5 & 0.2 & 4 & relu & 1.0 & 0.0 & exponential \\
26 & 3 & 5 & 2 & 5 & 0.5 & 8 & tanh & 0.8 & 0.0 & normal \\
27 & 8 & 32 & 5 & 13 & 0.1 & 4 & identity & 0.0 & 0.0 & exponential \\
28 & 3 & 5 & 2 & 6 & 0.4 & 4 & tanh & 0.1 & 0.0 & exponential \\
29 & 3 & 5 & 2 & 5 & 0.1 & 8 & identity & 0.2 & 0.0 & exponential \\
\hline
\end{tabular}
\end{table*}

% \bibliography{aaai2026}

\end{document}